
\documentclass[11pt,onecolumn,draftclsnofoot]{IEEEtran} 
\usepackage{fullpage}


\usepackage{vmargin} 
\usepackage{amsmath}

\setpapersize{A4}
\setmargins{3.5cm}
					 {1.5cm}
           {14.7cm}
           {23.42cm}
           {14pt}
           {1cm}
           {0pt}
           {2cm}

\usepackage{t1enc} 
\usepackage{times}		

\usepackage{graphicx} 
\usepackage{color} 
\usepackage{eso-pic} 
\usepackage{everyshi} 

\usepackage{hyperref}

\usepackage{booktabs}	  	

\usepackage{ccaption} 
\captionnamefont{\bf\footnotesize\sffamily} 
\captiontitlefont{\footnotesize\sffamily} 
\setlength{\abovecaptionskip}{0mm} 


\usepackage{scrpage2} 
%
%
\usepackage{comment}
\usepackage{multirow}
\renewpagestyle{plain}%
	{(\textwidth,0pt)%
		{\hfill}{\hfill}{\hfill}%
	(\textwidth,0pt)}%
	{(\textwidth,0pt)%
		{\hfill}{\hfill}{\hfill}%
	(\textwidth,0pt)}

\newtheorem{theorem}{Theorem}[section]
\newtheorem{lemma}[theorem]{Lemma}

\newenvironment{proof}[1][Proof]{\begin{trivlist}
\item[\hskip \labelsep {\bfseries #1}]}{\end{trivlist}}

\newcommand{\qed}{\nobreak \ifvmode \relax \else
      \ifdim\lastskip<1.5em \hskip-\lastskip
      \hskip1.5em plus0em minus0.5em \fi \nobreak
      \vrule height0.75em width0.5em depth0.25em\fi}
      


\pagestyle{plain} 
\title{Comments on the proof of adaptive submodular function minimization}
\author{Feng Nan, Venkatesh Saligrama \\ Boston University, Boston MA 02215}
\begin{document}
\maketitle

\begin{abstract}
We point out an issue with Theorem~5 appearing in \cite{GroupBasedActiveLearning}. Theorem~5 bounds the expected number of queries for a greedy algorithm to identify the class of an item within a constant factor of optimal. The Theorem is based on correctness of a result on minimization of adaptive submodular functions. We present an example that shows that a critical step in Theorem~A.11 of \cite{AdaSubmodular_jair2011} is incorrect. 
\end{abstract}

A typical application of the adaptive greedy algorithm is in the disease diagnosis problem: given a set of patients (realizations), each having a specific disease (class), we have access to a set of medical tests (items), each of which produces a discrete outcome (observation) when applied to a patient. Each test incurs a cost. Suppose an unknown patient arrives according to a given distribution among the set of patients, the goal of the problem is to design an adaptive testing strategy so that the expected cost to diagnose the unknown patient is minimized. 
To be concrete and simplify the notation, consider the following numerical example. 

\section*{A Numerical Example}

There are 5 realizations $\phi_1,\dots,\phi_5$, each belonging to a distinct class and 3 items $e_1,e_2,e_3$ of equal cost. Suppose each realization has equal probability mass: $p(\phi_i)=\frac{1}{5},\forall i=1,\dots,5$. The observations can be summarized in a table:
\begin{center}
    \begin{tabular}{| l | l | l | l |}
    \hline
     & $e_1$ & $e_2$ & $e_3$ \\ \hline
    $\phi_1$ & 0 & 1 & 0  \\ \hline
    $\phi_2$ & 0 & 0 & 1 \\ \hline
    $\phi_3$ & 0 & 1 & 1 \\ \hline
    $\phi_4$ & 1 & 0 & 1 \\ \hline
    $\phi_5$ & 1 & 1 & 0 \\ \hline
    \end{tabular}
\end{center}
Let $\pi$ be a policy, which is a decision tree that determines which item to choose based on the observed outcomes of the previous items until a class is determined for a given realization. We identify a node $a$ in the decision tree with all the realizations in it - those realizations that follow the same path according to $\pi$ until $a$. Let $Q_{\mathcal{A}}$ be the set of items chosen according to $\pi$ before node $a$. 
Let the reward function be
\begin{equation*}
f(Q_{\mathcal{A}}, \phi_i)=1-p_a^2+(p_a^{k_i})^2,
\end{equation*}
 where $p_a=\sum_{\phi_i\in a} p(\phi_i)$ is the probability mass of realizations in node $a$; $k_i$ is the class of $\phi_i$ and $p_a^{k_i}$ is the probability mass of realizations that reached node $a$ and are of class $k_i$. 
The above function is shown to be adaptive submodular and strongly adaptive monotone in Lemma 2 and 3 of \cite{GroupBasedActiveLearning}.
We also define the expected reward at node $a$ of the decision tree as
\begin{equation*}
f_E(a)=\sum_{\phi_i\in a} \frac{p(\phi_i)}{p_a}f(Q_{\mathcal{A}},\phi_i).
\end{equation*}
Given policy $\pi$ and a realization $\phi_i$, we can trace a unique path in the decision tree followed by $\phi_i$. Setting a threshold value $x$, we then define a stop node $\psi_\pi(\phi_i,x)$ along the path to be the farthest node from the root for which the expected reward is less than $x$. Formally,
\begin{equation*}
\psi_\pi(\phi_i,x)=\underset{a:\phi_i \in a, f_E(a)<x}{\text{argmax}} f_E(a).
\end{equation*}
Note $f_E$ monotonically increases as nodes move away from the root because $f_E$ is strongly adaptive monotone. In this example we set $x=\frac{23}{25}$. 
The authors in \cite{adaptiveSubmodular_arxiv} claim that the collection of stop nodes $\{\psi_\pi(\phi_i,x),\forall i \}$ form a partition of the realizations. We will show it is not the case.

The adaptive greedy policy will first choose $e_1$. 
Pictorially, the greedy policy can be represented as the decision tree in Figure \ref*{fig}. At the root node $r$ $e_1$ is chosen and node $b$ corresponds to the realizations that have response 0 for $e_1$: $\phi_1,\phi_2,\phi_3$. We can compute the expected reward at $b$ with $p_b=\frac{3}{5}$ as 
\begin{equation*}
f_E(b)= 1-p_b^2+\sum_{\phi_i\in b}\frac{p(\phi_i)}{p_b}(p(\phi_i))^2=1-(\frac{3}{5})^2+\frac{3(1/5)^3}{(3/5)}=\frac{17}{25}<x.
\end{equation*}
Then $e_2$ is chosen at $b$, separating $\phi_1,\phi_3$ to node $c$ and $\phi_2$ to node $d$. Compute the expected reward at $c$ with $p_c=\frac{2}{5}$ as 
\begin{equation*}
f_E(c)= 1-p_c^2+\sum_{\phi_i\in c}\frac{p(\phi_i)}{p_c}(p(\phi_i))^2=1-(\frac{2}{5})^2+\frac{2(1/5)^3}{(2/5)}=\frac{22}{25}<x,
\end{equation*}
and similarly the expected reward at $d$ with $p_d=\frac{1}{5}$ as
\begin{equation*}
f_E(d)= 1-p_d^2+\sum_{\phi_i\in d}\frac{p(\phi_i)}{p_d}(p(\phi_i))^2=1-(\frac{1}{5})^2+(\frac{1}{5})^2=1>x.
\end{equation*}
We also have $f_E(e)=1$ and $f_E(f)=1$. 
\begin{figure}
\centering
 \includegraphics[trim=5cm 8cm 5cm 3cm,angle=0,width=.8\textwidth]{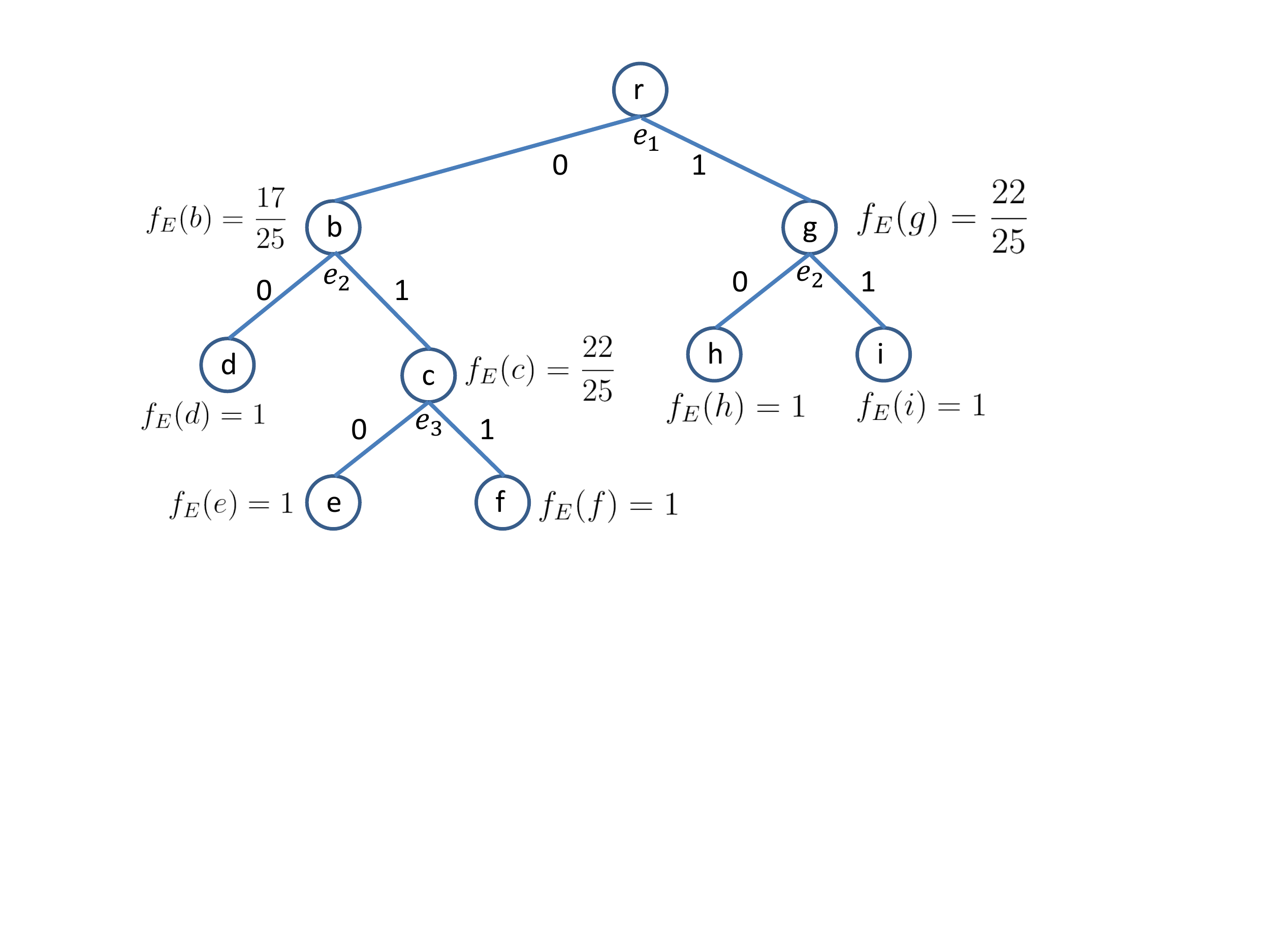}
\caption{Decision tree of the counter example}\label{fig}
\end{figure}

So the stop node for $\phi_2$ is node $b$: $\psi_\pi(\phi_2,x)=b$ and the stop node for $\phi_1$ is node $c$: $\psi_\pi(\phi_1,x)=c$. Clearly nodes $b$ and $c$ contain common realizations  hence they can not be part of a partition of the realizations. 

\section*{Mapping The Notations}
The notations in the above example can be translated to those in \cite{adaptiveSubmodular_arxiv}. Specifically, the realization ($\phi_i$), items ($e_i$) are the same. The expected reward at a node $a$ in a decision tree
is $f_E(a)=\mathbf{E}[f(\text{dom}(\psi),\Phi)|\Phi \sim \psi]$, where $\psi$ is the partial realization that includes the item-observation pairs from the root to node $a$ of the decision tree. Finally the stop node definition of  $\psi_\pi(\phi_i,x)$ in the above example is the same as $\psi(\phi_i,x)$ in the proof of Theorem 37 in \cite{adaptiveSubmodular_arxiv}, except denoting the policy $\pi$ explicitly. 

In the proof of Theorem 37 in \cite{adaptiveSubmodular_arxiv}, before Equation 37, the authors claimed that $\{ \{\phi:\phi \sim \psi_i^x\}: i=1,2,\dots,r \}$ partitions the set of realizations. We showed in our numerical example that it does not necessarily partition the set of realizations. The implication of this is that between Equation 37 to 38, "=" should be replaced with "$\geq$" because of overcounting. 
To show that Equation 37 is not equal to Equation 38 in \cite{adaptiveSubmodular_arxiv} based on our numerical example, let $c_1=c(\pi^*_{\text{avg}}|\phi_1), \dots, c_5=c(\pi^*_{\text{avg}}|\phi_5)$. Then we have 
\begin{align*}
&\mathbf{E}[c(\pi^*_{\text{avg}}|\psi_\pi(\Phi,x))]\\
&=p_b\sum_{\phi_i\in b}\frac{p(\phi_i)}{p_b}c_i+p_c\sum_{\phi_i\in c}\frac{p(\phi_i)}{p_c}c_i+p_g\sum_{\phi_i\in g}\frac{p(\phi_i)}{p_g}c_i\\
&=\frac{3}{5}\cdot \frac{1}{3}(c_1+c_2+c_3)+\frac{2}{5}\cdot \frac{1}{2}(c_1+c_3)+\frac{2}{5}\cdot \frac{1}{2}(c_4+c_5)\\
&=\frac{1}{5}(c_1+c_2+c_3+c_4+c_5)+\frac{1}{5}(c_1+c_3)\\
&> \frac{1}{5}(c_1+c_2+c_3+c_4+c_5)=c_{\text{avg}}(\pi^*_{\text{avg}}).
\end{align*}
This breaks the chain of inequalities used later to prove the final cost bound.

\section*{Appendix: The Theorem in Question}
We copy the theorem in question below for easy reference.
\begin{theorem}[Theorem 37 in \cite{adaptiveSubmodular_arxiv}]
Suppose $f: 2^E \times O^E \to \mathbf{R}_{\geq 0}$ is adaptive submodular and strongly adaptive monotone with respect to $p(\phi)$ and there exists $Q$ such that $f(E,\phi)=Q$ for all $\phi$. Let $
\eta$ be any value such that $f(S,\phi)>Q-\eta$ implies $f(S,\phi)=Q$ for all $S$ and $\phi$. Let $\delta=min_\phi p(\phi)$ be the minimum probability of any realization. Let $\pi_{avg}^*$ be an optimal policy minimizing the expected cost of items selected to guarantee every realization is covered. Let $\pi$ be an $\alpha$-approximate greedy policy with respect to the item costs. Then in general 
\begin{equation*}
c_{avg}(\pi) \leq \alpha c_{avg}(\pi_{avg}^*)(\ln (\frac{Q}{\delta \eta})+1)
\end{equation*}
and for self-certifying instances
\begin{equation*}
c_{avg}(\pi) \leq \alpha c_{avg}(\pi_{avg}^*)(\ln (\frac{Q}{\eta})+1).
\end{equation*}
\end{theorem}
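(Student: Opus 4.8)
The plan is to run a Wolsey-type greedy-cover argument in the adaptive setting: integrate the greedy policy's cost-per-reward-gain against a threshold $x$, bound that rate level-by-level against the optimal policy through adaptive submodularity, and integrate. First I would convert cost into an integral. For a node $a$ write $\theta(a)$ for the ratio of the cost of the item $\pi$ selects at $a$ to the expected reward gain it produces there. Because $f_E$ increases monotonically along every greedy path (strong adaptive monotonicity) and the expected gain at a node equals the average over its children of the realized gain, a short computation gives the representation
\[
c_{avg}(\pi)=\int_{f_E(r)}^{Q}\mathbf{E}_\Phi\!\left[\theta(\psi_\pi(\Phi,x))\right]dx ,
\]
where the inner expectation weights each distinct stop node $\psi_i^x$ by the mass of realizations that \emph{stop} there.

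Second, I would bound the integrand by $\alpha\,c_{avg}(\pi_{avg}^*)/(Q-x)$. At a stop node $\psi_i^x$ the $\alpha$-approximate greedy rule keeps $\theta(\psi_i^x)$ within a factor $\alpha$ of the best available cost-per-expected-gain, and adaptive submodularity lets me charge that best rate against the optimal policy continued from $\psi_i^x$. Writing $\mathrm{OPT}(\psi_i^x)$ for the optimal cost attributable to the realizations consistent with $\psi_i^x$ and $p_{\psi_i^x}$ for their probability mass, this yields a node inequality
\[
\bigl(Q-f_E(\psi_i^x)\bigr)\,\theta(\psi_i^x)\,p_{\psi_i^x}\le \alpha\,\mathrm{OPT}(\psi_i^x).
\]
Since a stop node has $f_E(\psi_i^x)<x$, its residual satisfies $Q-f_E(\psi_i^x)>Q-x$, so $\theta(\psi_i^x)\,p_{\psi_i^x}\le \alpha\,\mathrm{OPT}(\psi_i^x)/(Q-x)$.

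Third, I would aggregate and integrate. The realizations that stop at $\psi_i^x$ are a subset of those that pass through it, so the stopping-weighted integrand is dominated by the passing-weighted sum:
\[
\mathbf{E}_\Phi\!\left[\theta(\psi_\pi(\Phi,x))\right]\le \sum_i \theta(\psi_i^x)\,p_{\psi_i^x}\le \frac{\alpha}{Q-x}\sum_i \mathrm{OPT}(\psi_i^x).
\]
If $\sum_i \mathrm{OPT}(\psi_i^x)\le \mathrm{OPT}(S)=c_{avg}(\pi_{avg}^*)$, the integrand is at most $\alpha\,c_{avg}(\pi_{avg}^*)/(Q-x)$. Splitting $[f_E(r),Q]$ at $Q-\eta'$---with the effective granularity $\eta'=\delta\eta$ in general, since the last realization still to be separated can carry expected reward as small as $\delta\eta$, and $\eta'=\eta$ for self-certifying instances, where each covered realization certifies itself so a full gap $\eta$ remains---the lower piece integrates $\alpha\,c_{avg}(\pi_{avg}^*)/(Q-x)$ to a term of size $\alpha\,c_{avg}(\pi_{avg}^*)\ln(Q/\eta')$ (using $Q-f_E(r)\le Q$), and the terminal interval accounts for the additive $+1$, producing the two stated bounds.

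The hard part, and the only place I would not trust on inspection, is the inequality $\sum_i \mathrm{OPT}(\psi_i^x)\le \mathrm{OPT}(S)$ used in the third step. It holds if and only if the consistent sets $\{\phi:\phi\sim\psi_i^x\}$ of the distinct stop nodes \emph{partition} $S$, equivalently no stop node is an ancestor of another; only then does the cost of the global optimum decompose across the blocks without any realization being charged twice. The numerical example above is exactly a violation: node $c$ descends from the stop node $b$, the consistent sets $\{\phi_1,\phi_2,\phi_3\}$ and $\{\phi_1,\phi_3\}$ overlap, and one computes $\sum_i \mathrm{OPT}(\psi_i^x)=\mathrm{OPT}(S)+\tfrac{1}{5}(c_1+c_3)>\mathrm{OPT}(S)$. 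I therefore expect the partition claim to be the genuine obstacle: without it the last inequality reverses, the equality in the cost chain must be weakened to ``$\ge$'', and the clean logarithmic bound is not recovered by this route.
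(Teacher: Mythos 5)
Your reconstruction follows, step for step, the route of the original proof of Theorem 37: the representation of $c_{avg}(\pi)$ as an integral of the stop-node rate $\theta$, the per-node charging inequality $\bigl(Q-f_E(\psi_i^x)\bigr)\,\theta(\psi_i^x)\,p_{\psi_i^x}\le\alpha\,\mathrm{OPT}(\psi_i^x)$, and the final integration with granularity $\delta\eta$ (or $\eta$ in the self-certifying case). The step you refused to trust is indeed the fatal one, and this paper exists precisely to show that it fails. The inequality $\sum_i \mathrm{OPT}(\psi_i^x)\le \mathrm{OPT}(S)$ rests on the claim, made just before Equation 37 of the cited paper, that the consistent sets $\{\phi:\phi\sim\psi_i^x\}$ of the distinct stop nodes partition the realizations. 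The numerical example here refutes it: with $x=\tfrac{23}{25}$, the stop node of $\phi_2$ is $b$ (consistent set $\{\phi_1,\phi_2,\phi_3\}$, since $f_E(b)=\tfrac{17}{25}<x$ while $f_E(d)=1>x$), whereas the stop node of $\phi_1$ is the strict descendant $c$ (consistent set $\{\phi_1,\phi_3\}$, since $f_E(c)=\tfrac{22}{25}<x$ while $f_E(e)=1>x$). The sets overlap because the threshold $x$ is crossed at different depths along different branches below $b$; strong adaptive monotonicity makes $f_E$ increase along each path but does not rule out one stop node being an ancestor of another. The resulting overcount is exactly the $\tfrac{1}{5}(c_1+c_3)$ you computed, so $\mathbf{E}[c(\pi^*_{avg}|\psi_\pi(\Phi,x))]>c_{avg}(\pi^*_{avg})$, and the equality between Equations 37 and 38 of the original argument must be weakened to ``$\geq$'', which points the wrong way for the chain of inequalities.

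Be clear, however, about what this means for your submission: it is not a proof, and nothing in this paper repairs it. The paper itself offers no proof of the theorem --- it is a comment whose entire content is that the published proof breaks at the partition claim --- so the correct conclusion of your sketch is the one you drew: the stated bound is not established by this argument, and, as far as this paper is concerned, its truth is left open. Your diagnosis and the paper's are the same, down to the counterexample arithmetic; the only correction to make is that you should not present the first two steps as if they could be completed into a proof pending verification of the third, since the third is demonstrably false for the greedy tree in the example.
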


\bibliographystyle{plain}
\bibliography{OptimalTree}
\end{document}